\newtheorem{theorem}{Theorem}
\title{Deep Model Reference Adaptive Control}
\author{Girish Joshi and Girish Chowdhary
	\thanks{*Supported by the Laboratory Directed Research and Development program at Sandia National Laboratories, a multi-mission laboratory managed and operated by National Technology and Engineering Solutions of Sandia, LLC., a wholly owned subsidiary of Honeywell International, Inc., for the U.S. Department of Energy's National Nuclear Security Administration under contract DE-NA-0003525.}
	\thanks{Authors are with Coordinated Science Laboratory, University of Illinois,
		Urbana-Champaign, IL, USA
		{\tt\small girishj2@illinois.edu,girishc@illinois.edu}}%
}
\begin{document}
	
	\maketitle \thispagestyle{empty} \pagestyle{empty}

\begin{abstract}

We present a new neuroadaptive architecture: Deep Neural Network based Model Reference Adaptive Control (DMRAC). Our architecture utilizes the power of deep neural network representations for modeling significant nonlinearities while marrying it with the boundedness guarantees that characterize MRAC based controllers. We demonstrate through simulations and analysis that DMRAC can subsume previously studied learning based MRAC methods, such as concurrent learning and GP-MRAC. This makes DMRAC a highly powerful architecture for high-performance control of nonlinear systems with long-term learning properties.

\end{abstract}

\section{Introduction}
Deep Neural Networks (DNN) have lately shown tremendous empirical performance in many applications and various fields such as computer vision, speech recognition, translation, natural language processing, Robotics, Autonomous driving and many more \cite{goodfellow2016deep}. Unlike their counterparts such as shallow networks with Radial Basis Function features \cite{sanner:TNN:92,liu2018gaussian}, deep networks learn features by learning the weights of nonlinear compositions of weighted features arranged in a directed acyclic graph \cite{2013arXiv1301.3605Y}. It is now pretty clear that deep neural networks are outshining other classical
machine-learning techniques\cite{hinton2012deep}.
Leveraging these successes, there have been many exciting new claims regarding the control of complex dynamical systems in simulation using deep reinforcement learning\cite{mnih2015human}.   However, Deep Reinforcement Learning (D-RL) methods typically do not guarantee stability or even the boundedness of the system during the learning transient. Hence despite significant simulation success, D-RL has seldomly been used in safety-critical applications. D-RL methods often make the ergodicity assumption, requiring that there is a nonzero probability of the system states returning to the origin. In practice, such a condition is typically enforced by resetting the simulation when a failure occurs. Unfortunately, however, real-world systems do not have this reset option. Unlike, D-RL much effort has been devoted in the field of adaptive control to ensuring that the system stays stable during learning.

Model Reference Adaptive Control (MRAC) is one such leading method for adaptive control that seeks to learn a high-performance control policy in the presence of significant model uncertainties \cite{ioannou1988theory,tao2003adaptive, Pomet_92_TAC}. The key idea in MRAC is to find an update law for a parametric model of the uncertainty that ensures that the candidate Lyapunov function is non-increasing. Many update laws have been proposed and analyzed, which include but not limited to $\sigma$-modification \cite{ioannou1996robust}, $e$-modification \cite{annaswamy_CDC_89}, and projection-based updates \cite{Pomet_92_TAC}. More modern laws extending the classical parametric setting include $\ell_1$-adaptive control \cite{hovakimyan2010ℒ1} and concurrent learning \cite{chowdhary2013concurrent} have also been studied. 

A more recent work introduced by the author is the Gaussian Process Model Reference Adaptive Control (GP-MRAC), which utilizes a GP as a model of the uncertainty. A GP is a Bayesian nonparametric adaptive element that can adapt both its weights and the structure of the model in response to the data. The authors and others have shown that GP-MRAC has strong long-term learning properties as well as high control performance \cite{chowdhary2015bayesian, joshi2018adaptive}. However, GPs are  ``shallow'' machine learning models, and do not utilize the power of learning complex features through compositions as deep networks do (see \ref{sec:feature}). Hence, one wonders whether the power of deep learning could lead to even more powerful learning based MRAC architectures than those utilizing GPs.

In this paper, we address this critical question: How can MRAC utilize deep networks while guaranteeing stability? Towards that goal, our contributions are as follows: a) We develop an MRAC architecture that utilizes DNNs as the adaptive element; b) We propose an algorithm for the online update of the weights of the DNN by utilizing a dual time-scale adaptation scheme. In our algorithm, the weights of the outermost layers are adapted in real time, while the weights of the inner layers are adapted using batch updates c) We develop theory to guarantee Uniform Ultimate Boundedness (UUB) of the entire DMRAC controller; d) We demonstrate through simulation results that this architecture has desirable long term learning properties. 

We demonstrate how DNNs can be utilized in stable learning schemes for adaptive control of safety-critical systems. This provides an alternative to deep reinforcement learning for adaptive control applications requiring stability guarantees. Furthermore, the dual time-scale analysis scheme used by us should be generalizable to other DNN based learning architectures, including reinforcement learning.

\section{Background} 
\subsection{Deep Networks and Feature spaces in machine learning}\label{sec:feature} 
The key idea in machine learning is that a given function can be encoded with weighted combinations of  \textit{feature} vector $\Phi \in \mathcal{F}$, s.t $\Phi(x)=[\phi_1(x), \phi_2(x),...,\phi_k(x)]^T\in \mathbb{R}^k$, and $W^*\in\mathbb{R}^{k \times m}$ a vector of `ideal' weights s.t $\|y(x)-W^{*^T}\Phi(x)\|_\infty<\epsilon(x)$.
Instead of hand picking features, or relying on polynomials, Fourier basis functions, comparison-type features used in support vector machines \cite{schoelkofp:01,scholkopf2002learning} or Gaussian Processes \cite{rasmussen2006gaussian}, DNNs utilize composite functions of features arranged in a directed acyclic graphs, i.e. $\Phi(x)=\phi_n(\theta_{n-1},\phi_{n-1}(\theta_{n-2},\phi_{n-2}(...))))$ 
where $\theta_i$'s are the layer weights. The universal approximation property of the DNN with commonly used feature functions such as sigmoidal, tanh, and RELU is proved in the work by Hornik's \cite{Hornik:NN89} and shown empirically to be true by recent results \cite{2016arXiv160300988M,Poggio2017,2016arXiv161103530Z}. 
Hornik et al. argued the network with at least one hidden layer (also called Single Hidden Layer (SHL) network) to be a universal approximator. However, empirical results show that the networks with more hidden layers show better generalization capability in approximating complex function. While the theoretical reasons behind better generalization ability of DNN are still being investigated \cite{2016Matus}, for the purpose of this paper, we will assume that it is indeed true, and focus our efforts on designing a practical and stable control scheme using DNNs.

\subsection{Neuro-adaptive control}
Neural networks in adaptive control have been studied for a very long time. The seminal paper by Lewis \cite{Lewis:AJC:99} utilized Taylor series approximations to demonstrate uniform ultimate boundedness with a single hidden neural network. SHL networks are nonlinear in the parameters; hence, the analysis previously introduced for linear in parameter, radial basis function neural networks introduced by Sanner and Slotine does not directly apply \cite{sanner:TNN:92}. The back-propagation type scheme with non-increasing Lyapunov candidate as a constraint, introduced in Lewis' work has been widely used in Neuro-adaptive MRAC. Concurrent Learning MRAC (CL-MRAC) is a method for learning based neuro-adaptive control developed by the author to improve the learning properties and provide exponential tracking and weight error convergence guarantees. However, similar guarantees have not been available for SHL networks. There has been much work, towards including deeper neural networks in control; however, strong guarantees like those in MRAC on the closed-loop stability during online learning are not available. In this paper, we propose a dual time-scale learning approach which ensures such guarantees. Our approach should be generalizable to other applications of deep neural networks, including policy gradient Reinforcement Learning (RL) \cite{sutton1992reinforcement} which is very close to adaptive control in its formulation and also to more recent work in RL for control \cite{modares2014integral}. 
 
\subsection{Stochastic Gradient Descent and Batch Training}
We consider a deep network model with parameters $\boldsymbol{\theta}$, and consider the problem of optimizing a non convex loss function $L(\boldsymbol{Z, \theta})$, with respect to $\boldsymbol{\theta}$. Let $L(\boldsymbol{Z, \theta})$ is defined as average loss over $M$ training sample data points.
 \begin{equation}
     L(\boldsymbol{Z, \theta}) = \frac{1}{M}\sum_{i=1}^M \ell(\boldsymbol{Z_i, \theta})
     \label{empirical_loss}
 \end{equation}
where $M$ denotes the size of sample training set. For each sample size of $M$, the training data are in form of $M$-tuple $Z^M = (Z_1, Z_2, \dots Z_M)$ of $Z-$valued random variables drawn according to some unknown distribution $P \in \mathcal{P}$. Where each $Z_i = \{x_i, y_i\}$ are the labelled pair of input and target values. For each $P$ the expected loss can be computed as $\boldsymbol{E}_p(\ell(\boldsymbol{Z,\theta}))$. The above empirical loss \eqref{empirical_loss} is used as proxy for the expected value of loss with respect to the true data generating distribution.
 
Optimization based on the Stochastic Gradient Descent (SGD) algorithm uses a stochastic approximation of the gradient of the loss $L(\boldsymbol{Z, \theta})$ obtained over a mini-batch of $M$ training examples drawn from buffer $\mathcal{B}$. The resulting SGD weight update rule
 \begin{eqnarray}
 \boldsymbol{\theta}_{k+1} &=& \boldsymbol{\theta}_k - \eta
 \frac{1}{M}\sum_{i=1}^M \nabla_{ \boldsymbol{\theta}}\ell(\boldsymbol{Z_i, \theta}_k)
 \label{SGD2}
 \end{eqnarray}
where $\eta$ is the learning rate. Further details on generating i.i.d samples for DNN learning and the training details of network are provided in section \ref{section_DMRAC}.

\section{System Description}
\label{system_description}
This section discusses the formulation of model reference adaptive control (see e.g. \cite{ioannou1988theory}). We consider the following system with  uncertainty $\Delta(x)$:
\begin{equation}
\dot x(t) = Ax(t) + B(u(t) + \Delta(x))
\label{eq:0}
\end{equation}
where $x(t) \in \mathbb{R}^n$, $t \geqslant 0$ is the state vector, $u(t) \in \mathbb{R}^m$, $t \geqslant 0$ is the control input, $A \in \mathbb{R}^{n \times n}$, $B \in \mathbb{R}^{n \times m}$ are known system matrices and we assume the pair $(A,B)$ is controllable. The term $\Delta(x) : \mathbb{R}^n \to \mathbb{R}^m$ is matched system uncertainty and be Lipschitz continuous in $x(t) \in \mathcal{D}_x$. Let $ \mathcal{D}_x \subset \mathbb{R}^n$ be a compact set and the control $u(t)$ is assumed to belong to a set of admissible control inputs of measurable and bounded functions, ensuring the existence and uniqueness of the solution to \eqref{eq:0}.

The reference model is assumed to be linear and therefore the desired transient and steady-state performance is defined by a selecting the system eigenvalues in the negative half plane. The desired closed-loop response of the reference system is given by
\begin{equation}
\dot x_{rm}(t) = A_{rm}x_{rm}(t) + B_{rm}r(t)
\label{eq:ref model}
\end{equation}
where $x_{rm}(t) \in \mathcal{D}_x  \subset \mathbb{R}^{n}$ and $A_{rm} \in \mathbb{R}^{n \times n}$ is Hurwitz and $B_{rm} \in \mathbb{R}^{n \times r}$. Furthermore, the command $r(t) \in \mathbb{R}^{r}$ denotes a bounded, piece wise continuous, reference signal and we assume the reference model (\ref{eq:ref model}) is bounded input-bounded output (BIBO) stable \cite{ioannou1988theory}.

The true uncertainty $\Delta(x)$ in unknown, but it is assumed to be continuous over a compact domain $\mathcal{D}_x \subset \mathbb{R}^n$. A Deep Neural Networks (DNN) have been widely used to represent a function when the basis vector is not known.  Using DNNs, a non linearly parameterized network estimate of the uncertainty can be written as $\hat \Delta(x) \triangleq \theta_n^T\Phi(x)$,  
where $\theta_n \in \mathbb{R}^{k \times m}$ are network weights for the final layer and $\Phi(x)=\phi_n(\theta_{n-1},\phi_{n-1}(\theta_{n-2},\phi_{n-2}(...))))$, is a $k$ dimensional feature vector which is function of inner layer weights, activations and inputs. The basis vector $\Phi(x) \in \mathcal{F}: \mathbb{R}^{n} \to \mathbb{R}^{k}$ is considered to be Lipschitz continuous to ensure the existence and uniqueness of the solution (\ref{eq:0}).
\subsection{Total Controller}
\label{adaptive_identification}
The aim is to construct a feedback law $u(t)$, $t \geqslant 0$, such that the state of the uncertain dynamical system (\ref{eq:0}) asymptotically tracks the state of the reference model (\ref{eq:ref model}) despite the presence of matched uncertainty.

A tracking control law consisting of linear feedback term $u_{pd} = Kx(t)$, a linear feed-forward term $u_{crm} = K_rr(t)$ and an adaptive term $\nu_{ad}(t)$ form the total controller 
\begin{equation}
u = u_{pd} + u_{crm} - \nu_{ad}
\label{eq:total_Controller}
\end{equation}
The baseline full state feedback and feed-forward controller is designed to satisfy the matching conditions such that $A_{rm} = A-BK$ and $B_{rm} = BK_r$. For the adaptive controller ideally we want $\nu_{ad}(t) = \Delta(x(t))$. Since we do not have true uncertainty information, we use a DNN estimate of the system uncertainties in the controller as $\nu_{ad}(t) = \hat{\Delta}(x(t))$.

\subsection{Deep Model Reference Generative Network (D-MRGEN) for uncertainty estimation} 
Unlike traditional MRAC or SHL-MRAC weight update rule, where the weights are moved in the direction of diminishing tracking error, training a deep Neural network is much more involved.  Feed-Forward networks like DNNs are trained in a supervised manner over a batch of i.i.d data. Deep learning optimization is based on Stochastic Gradient Descent (SGD) or its variants. The SGD update rule relies on a stochastic approximation of the expected value of the gradient of the loss function over a training set or mini-batches.

To train a deep network to estimate the system uncertainties, unlike MRAC we need labeled pairs of state-true uncertainties $\{x(t),\Delta(x(t))\}$ i.i.d samples. Since we do not have access to true uncertainties ($\Delta(x)$), we use a generative network to generate estimates of $\Delta(x)$  to create the labeled targets for deep network training. For details of the generative network architecture in the adaptive controller, please see \cite{joshi2018adaptive}. This generative network is derived from separating the DNN into inner feature layer and the final output layer of the network. We also separate in time-scale the weight updates of these two parts of DNN. Temporally separated weight update algorithm for the DNN, approximating system uncertainty is presented in more details in further sections.

\subsection{Online Parameter Estimation law}
\label{Identification}
The last layer of DNN with learned features from inner layer forms the Deep-Model Reference Generative Network (D-MRGeN).  We use the MRAC learning rule to update pointwise in time, the weights of the D-MRGeN in the direction of achieving asymptotic tracking of the reference model by the actual system.

Since we use the D-MRGeN estimates to train DNN model, we first study the admissibility and stability characteristics of the generative model estimate ${\Delta}'(x)$ in the controller (\ref{eq:total_Controller}). To achieve the asymptotic convergence of the reference model tracking error to zero, we use the D-MRGeN estimate in the controller \eqref{eq:total_Controller} as $\nu_{ad} = \Delta'(x)$
\begin{equation}
    \nu_{ad}(t) = W^T\phi_n(\theta_{n-1},\phi_{n-1}(\theta_{n-2},\phi_{n-2}(...))))
\end{equation}
To differentiate the weights of D-MRGeN from last layer weights of DNN ``$\theta_n$",  we denote D-MRGeN weights as ``$W$". 

\textbf{\emph{Assumption 1:}} Appealing to the universal approximation property of Neural Networks \cite{park1991universal} 
we have that, for every given basis functions $\Phi(x) \in \mathcal{F}$ there exists unique ideal weights $W^* \in \mathbb{R}^{k \times m}$ and $\epsilon_1(x) \in \mathbb{R}^{m}$ such that the following
approximation holds
\begin{equation}
\Delta(x) = W^{*T}\Phi(x) + \epsilon_1(x), \hspace{2mm} \forall x(t) \in \mathcal{D}_x \subset \mathbb{R}^{n}
\label{eq:3}
\end{equation}
\textbf{\emph{Fact 1:}} The network approximation error $\epsilon_1(x)$ is upper bounded, s.t  $\bar{\epsilon}_1 = \sup_{x \in \mathcal{D}_x}\|\epsilon_1(x)\|$, and can be made arbitrarily small given sufficiently large number of basis functions.

The reference model tracking error is defined as $e(t) = x_{rm}(t)- x(t)$.
Using (\ref{eq:0}) \& (\ref{eq:ref model}) and the controller of form (\ref{eq:total_Controller}) with adaptation term $\nu_{ad}$, the tracking error dynamics can be written as
\begin{equation}
\dot e(t) = \dot x_{rm}(t) - \dot{x}(t)
\label{eq:13}
\end{equation}
\begin{equation}
\dot e(t) = A_{rm}e(t) + \tilde W^T\Phi(x) +  \epsilon_1(x)
\label{eq:14}
\end{equation}
where $\tilde W = W^*-W$ is error in parameter.

The estimate of the unknown true network parameters $W^*$ are calculated on-line using the weight update rule (\ref{eq:18}); correcting the weight estimates in the direction of minimizing the instantaneous tracking error $e(t)$.  The resulting update rule for network weights in estimating the total uncertainty in the system is as follows 
\begin{equation}
\dot {W} = \Gamma proj(W,\Phi(x)e(t)'P) \hspace{5mm} {W}(0) = {W}_0 \label{eq:18}
\end{equation} 
where $\Gamma \in \mathbb{R}^{k \times k}$ is the learning rate and $P \in \mathbb{R}^{n \times n}$ is a positive definite matrix. For given Hurwitz $A_{rm}$, the matrix $P \in \mathbb{R}^{n \times n}$ is a positive definite solution of Lyapunov equation $A_{rm}^TP + PA_{rm} + Q = 0$ for given $Q > 0$

\textbf{\emph{Assumption 2:}} For uncertainty parameterized by unknown true weight ${W}^* \in \mathbb{R}^{k \times m}$ and known nonlinear basis $\Phi(x)$,
the ideal weight matrix is assumed to be upper bounded s.t $\|{W}^*\| \leq \mathcal{W}_b$. This is not a restrictive assumption. 

\subsubsection{Lyapunov Analysis}
The on-line adaptive identification law (\ref{eq:18}) guarantees the asymptotic convergence of the tracking errors $e(t)$ and parameter error $\tilde W(t)$ under the condition of persistency of excitation \cite{aastrom2013adaptive,ioannou1988theory} for the structured uncertainty. Similar to the results by Lewis for SHL networks \cite{lewis1999nonlinear}, we show here that under the assumption of unstructured uncertainty represented by a deep neural network, the tracking error is uniformly ultimately bounded (UUB). We will prove the following theorem under switching feature vector assumption.
\begin{theorem}
    Consider the actual and reference plant model (\ref{eq:0}) \& (\ref{eq:ref model}). If the weights parameterizing total uncertainty in the system are updated according to identification law \eqref{eq:18} Then the tracking error $\|e\|$ and error in network weights $\|\tilde W\|$ are bounded for all $\Phi \in \mathcal{F}$.
\label{Theorem-1}
\end{theorem}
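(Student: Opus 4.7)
The plan is a standard Lyapunov UUB argument tailored to the MRAC error dynamics \eqref{eq:14}, together with exploitation of the projection operator in \eqref{eq:18} to dispense with the parameter error boundedness directly, and a uniform-in-$\Phi$ argument to accommodate the switching feature vector.

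First I would take the composite quadratic Lyapunov candidate
\begin{equation*}
V(e,\tilde W) \;=\; e^{T} P e \;+\; \mathrm{tr}\bigl(\tilde W^{T}\Gamma^{-1}\tilde W\bigr),
\end{equation*}
where $P>0$ solves $A_{rm}^{T}P + P A_{rm} = -Q$. Differentiating along \eqref{eq:14} and using $\dot{\tilde W}=-\dot W$, I get
\begin{equation*}
\dot V \;=\; -e^{T}Qe \;+\; 2e^{T}P\,\tilde W^{T}\Phi(x) \;+\; 2e^{T}P\,\epsilon_1(x) \;-\; 2\,\mathrm{tr}\bigl(\tilde W^{T}\,\mathrm{proj}(W,\Phi(x)e^{T}P)\bigr).
\end{equation*}
The second and fourth terms are the classical MRAC cross terms; by the standard inequality $\mathrm{tr}\bigl(\tilde W^{T}(Y-\mathrm{proj}(W,Y))\bigr)\le 0$ for $Y=\Phi(x)e^{T}P$, the projection only helps (cancels the cross term up to a non-positive residual). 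What remains is essentially $\dot V \le -e^{T}Qe + 2e^{T}P\epsilon_1(x)$.

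Next I would bound the disturbance term using Fact~1: $\|2e^{T}P\epsilon_1(x)\|\le 2\|P\|\,\bar\epsilon_1\,\|e\|$, giving
\begin{equation*}
\dot V \;\le\; -\lambda_{\min}(Q)\|e\|^{2} \;+\; 2\|P\|\,\bar\epsilon_1\,\|e\|,
\end{equation*}
which is strictly negative whenever $\|e\|>2\|P\|\bar\epsilon_1/\lambda_{\min}(Q)$. This yields UUB of $\|e\|$ by the standard Lyapunov argument (e.g. level-set containment). Boundedness of $\|\tilde W\|$ is obtained independently and more directly: the projection operator in \eqref{eq:18} confines $W(t)$ to a pre-specified convex compact set containing the ideal $W^{*}$ (by Assumption 2, $\|W^{*}\|\le\mathcal{W}_{b}$), so $\|\tilde W(t)\|\le 2\mathcal{W}_{b}$ uniformly in $t$.

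The main obstacle is handling the switching feature vector $\Phi\in\mathcal F$, since the inner DNN layers are updated in batches and the features can change abruptly. I would address this by making the argument uniform in $\Phi$: for each admissible feature map, $\Phi$ is Lipschitz on the compact set $\mathcal D_x$ and hence uniformly bounded, the projection bound on $\tilde W$ is independent of $\Phi$, and $\bar\epsilon_1$ in Fact~1 is an upper bound valid across $\mathcal F$. The only subtlety is that $V$ is not continuous across a feature switch because $\tilde W = W^{*}-W$ uses a different $W^{*}$ for each $\Phi$; however, both pre- and post-switch values of $V$ lie in a common level set determined by the projection radius and the already-derived UUB ball for $e$. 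Thus $V$ remains bounded across switches, and between switches $\dot V<0$ outside the UUB ball, giving the claimed boundedness of $\|e\|$ and $\|\tilde W\|$ for all $\Phi\in\mathcal F$.
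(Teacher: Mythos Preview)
Your core Lyapunov calculation---the composite candidate $V=e^{T}Pe+\mathrm{tr}(\tilde W^{T}\Gamma^{-1}\tilde W)$, the projection inequality to kill the cross term, and the resulting $\dot V\le -\lambda_{\min}(Q)\|e\|^{2}+2\|P\|\bar\epsilon\,\|e\|$---is exactly what the paper does. The difference is in how the feature switch is handled. The paper does \emph{not} argue about jumps of $V$ across a switch; instead it absorbs the switch into an additional approximation error: letting $\Phi$ be the pre-switch and $\bar\Phi$ the post-switch feature, it defines $\epsilon_{2}(x)=\sup_{\bar\Phi\in\mathcal F}|W^{T}\bar\Phi(x)-W^{T}\Phi(x)|$, adds and subtracts $W^{T}\bar\Phi(x)$ in the error dynamics, and (invoking Assumption~1 that an ideal $W^{*}$ exists for every feature) rewrites $\dot e=A_{rm}e+\tilde W^{T}\bar\Phi+\epsilon_{1}+\epsilon_{2}$. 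The Lyapunov derivative then becomes $\dot V=-e^{T}Qe+2e^{T}P(\epsilon_{1}+\epsilon_{2})$, and the ultimate bound is $\|e\|\ge 2\lambda_{\max}(P)(\bar\epsilon_{1}+\bar\epsilon_{2})/\lambda_{\min}(Q)$; boundedness of $\tilde W$ is then cited from the projection operator, as you do.

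Your alternative---treating the switch as a bounded jump in $V$ because $W^{*}$ changes while projection keeps $W$ confined---is a legitimate route, but as written it has a small circularity: you justify the jump in $V$ being bounded by appealing to ``the already-derived UUB ball for $e$,'' yet that UUB was derived for a fixed $\Phi$ and is precisely what you are trying to extend across switches. The clean way to close this (and you have all the ingredients) is to decouple: projection gives $\|\tilde W\|\le 2\mathcal W_{b}$ uniformly in $t$ and $\Phi$, and $\Phi$ is bounded on the compact $\mathcal D_{x}$, so $\tilde W^{T}\Phi$ is a uniformly bounded forcing term; then run the UUB argument on $V_{e}=e^{T}Pe$ alone, which is continuous across switches. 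The paper's $\epsilon_{2}$ device buys a single explicit ultimate-bound formula without tracking $V$ across switch instants; your decoupled route is arguably more modular and avoids the paper's somewhat informal step of ``replacing $W^{*T}\Phi$ by $W^{*T}\bar\Phi$.''
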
 
\begin{proof}
The feature vectors belong to a function class characterized by the inner layer network weights $\theta_i$ s.t $\Phi \in \mathcal{F}$. We will prove the Lyapunov stability under the assumption that inner layer of DNN presents us a feature which results in the worst possible approximation error compared to network with features before switch. 

For the purpose of this proof let $\Phi(x)$ denote feature before switch and $\bar{\Phi}(x)$ be the feature after switch. We define the error $\epsilon_2(x)$ as,
\begin{equation}
    \epsilon_2(x) = \sup_{\bar{\Phi} \in \mathcal{F}}\left|W^T\bar{\Phi}(x) - W^T\Phi(x)\right|
\end{equation}
Similar to \textbf{\emph{Fact-1}} we can upper bound the error $\epsilon_2(x)$ as $\bar{\epsilon}_2 = \sup_{x \in \mathcal{D}_x}\|\epsilon_2(x)\|$.
By adding and subtracting the term $W^T\bar{\Phi}(x)$, we can rewrite the error dynamics \eqref{eq:14} with switched basis as,
\begin{eqnarray}
\dot e(t) &=& A_{rm}e(t) + W^{*T}\Phi(x) - W^T\Phi(x) \nonumber \\
&& + W^T\bar{\Phi}(x) - W^T\bar{\Phi}(x) + \epsilon_1(x)
\label{eq:14_1}
\end{eqnarray}
From \textbf{\emph{Assumption-1}} we know there exists a $W^*$ $\forall \Phi \in \mathcal{F}$. Therefore we can replace $W^{*T}\Phi(x)$ by $W^{*T}\bar{\Phi}(x)$ and rewrite the Eq-\eqref{eq:14_1} as
\begin{eqnarray}
\dot e(t) &=& A_{rm}e(t) + \tilde{W}^{T}\bar{\Phi}(x) + W^T(\bar{\Phi}(x) - \Phi(x)) + \epsilon_1(x) \nonumber \\
\label{eq:14_2}
\end{eqnarray}
For arbitrary switching, for any $\bar{\Phi}(x) \in \mathcal{F}$, we can prove the boundedness by considering worst possible approximation error and therefore can write,
\begin{eqnarray}
\dot e(t) &=& A_{rm}e(t) + \tilde{W}^{T}\bar{\Phi}(x) +  \epsilon_2(x) + \epsilon_1(x) 
\label{eq:14_3}
\end{eqnarray}
Now lets consider $V(e,\tilde W) > 0$ be a differentiable, positive definite radially unbounded Lyapunov candidate function,
\begin{equation}
V(e,\tilde W) = e^TPe + \frac{\tilde W^T \Gamma^{-1} \tilde W}{2}
\label{eq:20}
\end{equation}
The time derivative of the lyapunov function (\ref{eq:20}) along the trajectory (\ref{eq:14_3}) can be evaluated as
\begin{equation}
\dot V(e,\tilde W) = \dot e^TPe + e^TP \dot e - \tilde W^T\Gamma^{-1}\dot{\hat W}
\label{eq:25}
\end{equation}
Using (\ref{eq:14_3}) \& (\ref{eq:18}) in (\ref{eq:25}), the time derivative of the lyanpunov function reduces to 
\begin{eqnarray}
\dot V(e,\tilde W) &=& -e^TQe + 2e^TP\epsilon(x)
\label{eq:22}
\end{eqnarray}
where $\epsilon(x) = \epsilon_1(x) +\epsilon_2(x)$ and $\bar{\epsilon} = \bar{\epsilon_1} + \bar{\epsilon_2}$.\\ 
Hence $\dot V(e,\tilde W) \leq 0$ outside compact neighborhood of the origin $e = 0$, for some sufficiently large $\lambda_{min}(Q)$. 
\begin{equation}
\|e(t)\| \geq \frac{2\lambda_{max}(P)\bar\epsilon}{\lambda_{min}(Q)}
\label{eq:error_bound}
\end{equation}
Using the BIBO assumption $x_{rm}(t)$ is bounded for bounded reference signal $r(t)$, thereby $x(t)$ remains bounded.  Since $V(e,\tilde W)$  is radially unbounded the result holds for all $x(0) \in \mathcal{D}_x$.
Using the fact, the error in parameters $\tilde{W}$ are bounded through projection operator \cite{larchev2010projection} and further using Lyapunov theory and Barbalat’s Lemma \cite{narendra2012stable} we can show that $e(t)$ is uniformly ultimately bounded in vicinity to zero solution. 
\end{proof}

From Theorem-\ref{Theorem-1} \& (\ref{eq:14}) and using system theory \cite{kailath1980linear} we can infer that as $e(t) \to 0$, ${\Delta}'(x) \to \Delta(x)$ in point-wise sense. Hence D-MRGeN estimates $y_{\tau} = {\Delta}'(x_{\tau})$ are admissible target values for training DNN features over the data $Z^M = \{\{x_{\tau},y_{\tau}\}\}_{\tau = 1}^{M}$. 

The details of DNN training and implementation details of DMRAC controller is presented in the following section.

\section{Adaptive Control using Deep nets (DMRAC)}
\label{section_DMRAC}
The DNN architecture for MRAC is trained in two steps. We separate the DNN into two networks, as shown in Fig-\ref{DNN_architecture}. The faster learning outer adaptive network and slower deep feature network. DMRAC learns underlying deep feature vector to the system uncertainty using locally exciting uncertainty estimates obtained using a generative network. Between successive updates of the inner layer weights, the feature provided by the inner layers of the deep network is used as the fixed feature vector for outer layer adaptive network update and evaluation. The algorithm for DNN learning and DMRAC controller is provided in Algorithm-\ref{alg:DMRAC}. Through this architecture of mixing two-time scale learning, we fuse the benefits of DNN memory through the retention of relevant, exciting features and robustness, boundedness guarantee in reference tracking. This key feature of the presented framework ensures robustness while guaranteeing long term learning and memory in the adaptive network.

Also as indicated in the controller architecture Fig-\ref{DNN_architecture} we can use contextual state `$c_i$' other than system state $x(t)$ to extract relevant features. These contextual states could be relevant model information not captured in system states. For example, for an aircraft system, vehicle parameters like pitot tube measurement, the angle of attack, engine thrust, and so on. These contextual states can extract features which help in decision making in case of faults. The work on DMRAC with contextual states will be dealt with in the follow on work.
\begin{figure}
    \centering
    \includegraphics[width=1.0\linewidth]{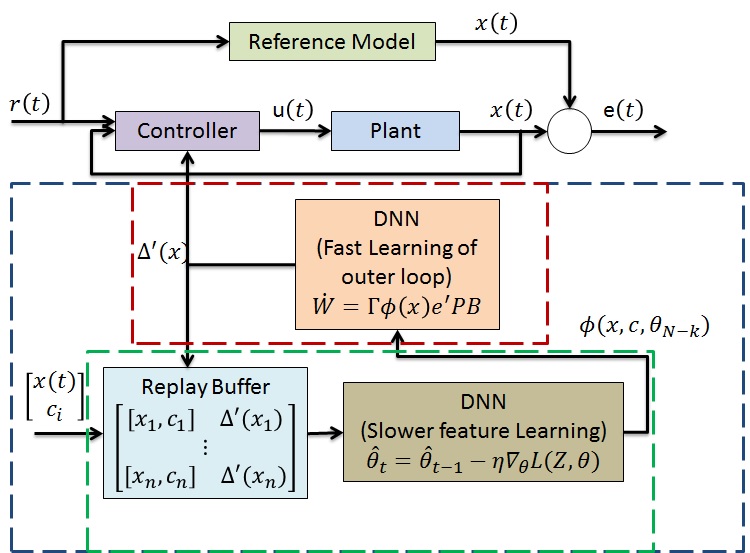}
    \caption{DMRAC training and controller details}
    \label{DNN_architecture}
    \vspace{-5mm}
\end{figure}
The DNN in DMRAC controller is trained over training dataset $Z^M = \{x_i, {\Delta}'(x_i)\}_{i=1}^M$, where the ${\Delta}'(x_i)$ are D-MRGeN estimates of the uncertainty. The training dataset $Z^M$ is randomly drawn from a larger data buffer $\mathcal{B}$. Not every pair of data $\{x_i, {\Delta}'(x_i)\}$ from D-MRGeN is added to the training buffer $\mathcal{B}$. We qualify the input-target pair based on kernel independence test such that to ensure that we collect locally exciting independent information which provides a sufficiently rich representation of the operating domain. Since the state-uncertainty data is the realization of a Markov process, such a method for qualifying data to be sufficiently independent of previous data-points is necessary. The algorithm details to qualify and add a data point to the buffer is provided in detail in subsection \ref{subsection-buffer}.

\subsection{Details of Deep Feature Training using D-MRGeN}
This section provides the details of the DNN training over data samples observed over n-dimensional input subspace $x(t) \in \mathcal{X} \in \mathbb{R}^{n}$ and m-dimensional targets subspace $y\in\mathcal{Y} \in \mathbb{R}^m$. The sample set is denoted as $\mathcal{Z}$ where $\mathcal{Z} \in \mathcal{X} \times \mathcal{Y}$.

We are interested in the function approximation tasks for DNN. The function $f_{\boldsymbol{\theta}}$ is the learned approximation to the model uncertainty with parameters $\boldsymbol{\theta} \in \boldsymbol{\Theta}$, where $\boldsymbol{\Theta}$ is the space of parameters, i.e. $f_{\boldsymbol{\theta}}: \mathbb{R}^n \to \mathbb{R}^m$. We assume a training data buffer $\mathcal{B}$ has $p_{max}$ training examples, such that the set $Z^{p_{max}} = \{Z_i | Z_i \in \mathcal{Z}\}_{i=1}^{p_{max}} = \{(x_i, y_i) \in \mathcal{X}\times\mathcal{Y}\}_{i=1}^{p_{max}}$. The samples are independently drawn from the buffer $\mathcal{B}$ over probability distribution $P$. The hypothesis set, which consist of all possible functions $f_{\boldsymbol{\theta}}$ is denoted as $\mathcal{H}$. Therefore a learning algorithm $\mathcal{A}$ (in our case SGD) is a mapping from $\mathcal{A}: \mathcal{Z}^{p_{max}} \to \mathcal{H}$ 

The loss function, which measures the discrepancy between true target $y$ and algorithm's estimated target function value $f_{\boldsymbol{\theta}}$ is denoted by $L(y, f_{\boldsymbol{\theta}}(x))$. Specific to work presented in this paper, we use a $\ell_2$-norm between values i.e. $\mathbb{E}_p(\ell(y, f_{\boldsymbol{\theta}}(x))) = \mathbb{E}_P \left(\|y_i - f_{\boldsymbol{\theta}}(x_i)\|_2\right)$ as loss function for DNN training. The empirical loss \eqref{empirical_loss} is used to approximate the loss function since the distribution $P$ is unknown to learning algorithm. The weights are updated using SGD in the direction of negative gradient of the loss function as given in \eqref{SGD2}.

Unlike the conventional DNN training where the true target values $y \in  \mathcal{Y}$ are available for every input $x \in  \mathcal{X}$, in DMRAC true system uncertainties as the labeled targets are not available for the network training. We use the part of the network itself (the last layer) with pointwise weight updated according to MRAC-rule as the generative model for the data. The D-MRGeN uncertainty estimates $y = {W}^T\Phi(x,\theta_1,\theta_2, \ldots \theta_{n-1})= {\Delta}'(x)$ along with inputs $x_i$ make the training data set $Z^{p_{max}} = \{x_i, {\Delta}'(x_i)\}_{i=1}^{p_{max}}$. Note that we use interchangably $x_i$ and $x(t)$ as discrete representation of continuous state vector for DNN training. The main purpose of DNN in the adaptive network is to extract relevant features of the system uncertainties, which otherwise is very tedious to obtain without the limits on the domain of operation.

We also demonstrate empirically, that the DNN features trained over past i.i.d representative data retains the memory of the past instances and can be used as the frozen feed-forward network over similar reference tracking tasks without loss of the guaranteed tracking performance.

\subsection{Method for Recording Data using MRGeN for DNN Training}
\label{subsection-buffer}
In statistical inference, implicitly or explicitly one always assume that the training set $Z^M = \{x_i, y_i\}_{i=1}^M$ is composed on M-input-target tuples that are independently drawn from buffer $\mathcal{B}$ over same joint distribution $P(x,y)$. The i.i.d assumption on the data is required for robustness, consistency of the network training and for bounds on the generalization error \cite{xu2012robustness, vandegeer2009}. In classical generalization proofs one such condition is that $\frac{1}{p_{max}}\mathbb{X}^T\mathbb{X} \to \gamma$ as ${p_{max}} \to \infty$, where $\mathbb{X}$ denotes the design matrix with rows $\Phi_i^T$. The i.i.d assumption implies the above condition is fulfilled and hence is sufficient but not necessary condition for consistency and error bound for generative modeling. 

The key capability brought about by DMRAC is a relevant feature extraction from the data. Feature extraction in DNN is achieved by using recorded data concurrently with current data. The recorded data include the state $x_i$, feature vector $\Phi(x_i)$ and associated D-MRGeN estimate of the uncertainty ${\Delta}'(x_i)$. For a given $\zeta_{tol} \in \mathbb{R}_+$ a simple way to select the instantaneous data point $\{x_i, \Delta'(x_i)\}$ for recording is to required to satisfy following condition
\begin{equation}
    \gamma_i = \frac{\|\Phi(x_i) - \Phi_p\|^2}{\|\Phi(x_i)\|} \geq \zeta_{tol}
    \label{eq:kernel_test}
\end{equation}
Where the index $p$ is over the data points in buffer $\mathcal{B}$. The above method ascertains only those data points are selected for recording that are sufficiently different from all other previously recorded data points in the buffer. Since the buffer $\mathcal{B}$ is of finite dimension, the data is stored in a cyclic manner. As the number of data points reaches the buffer budget, a new data is added only upon one existing data point is removed such that the singular value of the buffer is maximized. The singular value maximization approach for the training data buffer update is provided in \cite{5991481}.
\begin{algorithm}[h!]
    \caption{D-MRAC Controller Training}
    \label{alg:DMRAC}
    \begin{algorithmic}[1]
        \STATE {\bfseries Input:} $\Gamma, \eta, \zeta_{tol}, p_{max}$
        \WHILE{New measurements are available}
        \STATE Update the D-MRGeN weights $W$ using Eq:\eqref{eq:18}
        \STATE Compute $y_{\tau+1} = \hat{W}^T\Phi(x_{\tau+1})$
        \STATE Given $x_{\tau+1}$ compute $\gamma_{\tau+1}$ by Eq-\eqref{eq:kernel_test}.
                \IF{$\gamma_{\tau+1} \geqslant \zeta_{tol}$}
                \STATE Update $\mathcal{B}:\boldsymbol{Z}(:) = \{x_{\tau+1}, y_{\tau+1}\}$ and $\mathbb{X}: \Phi(x_{\tau+1})$
                	\IF{$|\mathcal{B}| > p_{max}$}
                    \STATE Delete element in $\mathcal{B}$ by SVD maximization \cite{5991481}
                    \ENDIF
        	    \ENDIF
    	    \IF{$|\mathcal{B}| \geq M$}
    	    \STATE Sample a mini-batch of data $\boldsymbol{Z}^M \subset \mathcal{B}$
    	    \STATE Train the DNN network over mini-batch data using Eq-\eqref{SGD2}
    	    \STATE Update the feature vector $\Phi$ for D-MRGeN network
    	    \ENDIF
        \ENDWHILE
        \end{algorithmic}
\end{algorithm}

\section{Sample Complexity and Stability Analysis for DMRAC}
In this section, we present the sample complexity results, generalization error bounds and stability guarantee proof for DMRAC. We show that DMRAC controller is characterized by the memory of the features learned over previously observed training data. We further demonstrate in simulation that when a trained DMRAC is used as a feed-forward network with frozen weights, can still produce bounded tracking performance on reference tracking tasks that are related but reasonably different from those seen during network training.  We ascribe this property of DMRAC to the very low generalization error bounds of the DNN. We will prove this property in two steps. Firstly we will prove the bound on the generalization error of DNN using Lyapunov theory such that we achieve an asymptotic convergence in tracking error. Further, we will show information theoretically the lower bound on the number of independent samples we need to train through before we can claim the DNN generalization error is well below a determined lower level given by Lyapunov analysis.
\subsection{Stability Analysis}
The generalization error of a machine learning model is defined as the difference between the empirical loss of the training set and the expected loss of test set \cite{2018arXiv180801174J}. This measure represents the ability of the trained model to generalize well from the learning data to new unseen data, thereby being able to extrapolate from training data to new test data. Hence generalization error can be defined as
\begin{equation}
    \hat{\Delta}(x) - f_{\boldsymbol{\theta}}(x) \leqslant \epsilon
\end{equation}
Using the DMRAC (as frozen network) controller in \eqref{eq:total_Controller} and using systems \eqref{eq:0} we can write the system dynamics as
\begin{equation}
    \dot x(t) = Ax(t) + B(-Kx(t) + K_rr(t) -f_{\boldsymbol{\theta}}(x(t)) + \Delta(x))
\end{equation}
We can simplify the above equation as
\begin{equation}
    \dot x(t) = A_{rm}x(t) + B_{rm}r(t)+B(\Delta(x)-f_{\boldsymbol{\theta}}(x(t)))
\end{equation}
Adding and subtracting the term ${\Delta}'(x)$ in above expression and using the training and generalization error definitions we can write,
\begin{eqnarray}
\dot x(t) &=& A_{rm}x(t) + B_{rm}r(t)\\
&&+B(\Delta(x)-{\Delta}'(x(t))+{\Delta}'(x(t))-f_{\boldsymbol{\theta}}(x(t))) \nonumber
\end{eqnarray}
The term $\left(\Delta(x)-{\Delta}'(x(t))\right)$ is the D-MRGeN training error and $\left({\Delta}'(x(t))-f_{\boldsymbol{\theta}}(x(t))\right)$ is the generalization error of the DMRAC DNN network. For simplicity of analysis we assume the training error is zero, this assumption is not very restrictive since training error can be made arbitrarily small by tuning network architecture and training epochs. The reference tracking error dynamics can be written as,
\begin{equation}
\dot e(t) = A_{rm}e(t)  +  \epsilon
\label{eq:DMRAC_error_dynamics}
\end{equation}
To analyze the asymptotic tracking performance of the error dynamics under DMRAC controller we can define a Lyapunov candidate function as $V(e) = e^TPe$ and its time derivative along the error dynamics \eqref{eq:DMRAC_error_dynamics} can be written as
\begin{equation}
    \dot V(e) = -e^TQe + 2\epsilon Pe
\end{equation}
where $Q$ is solution for the Lyaunov equation $A_{rm}^TP + PA_{rm} = -Q$. To satisfy the condition $\dot V(e) < 0$ we get the following upper bound on generalization error,
\begin{equation}
    \|\epsilon\| < \frac{\lambda_{max}(Q)\|e\|}{\lambda_{min}(P)}
    \label{eq:generalization_bound}
\end{equation}
The idea is, that if the DNN produces a generalization error lower than the specified bound \eqref{eq:generalization_bound}, then we can claim Lyanpunov stability of the system under DMRAC controller.
\subsection{Sample Complexity of DMRAC}
In this section, we will study the sample complexity results from computational theory and show that when applied to a network learning real-valued functions the number of training samples grows at least linearly with the number of tunable parameters to achieve specified generalization error.
\begin{theorem}
Suppose a neural network with arbitrary activation functions and an output that takes values in $[-1,1]$. Let $\mathcal{H}$ be the hypothesis class characterized by N-weights and each weight represented using k-bits. Then any squared error minimization (SEM) algorithm $\mathcal{A}$ over $\mathcal{H}$, to achieve a generalization error \eqref{eq:generalization_bound} admits a sample complexity bounded as follows
\begin{equation}
    m_{\mathcal{A}}(\epsilon, \delta) \leqslant \frac{1}{\epsilon^2} \left(kN\ln2 + \ln\left(\frac{2}{\delta}\right)\right)
\end{equation}
where $N$ is total number of tunable weights in the DNN.
\end{theorem}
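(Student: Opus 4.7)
The plan is to exploit the finiteness of the hypothesis class induced by the $k$-bit weight quantization and then invoke standard uniform convergence tools (Hoeffding plus a union bound) on the squared-error loss, which is bounded because the outputs lie in $[-1,1]$. This is a classical PAC-style argument specialized to a discretized parameter space, so most of the work is bookkeeping rather than discovering new structure.

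First, I would observe the critical cardinality bound: since each of the $N$ weights is encoded in $k$ bits, the hypothesis class satisfies $|\mathcal{H}| \leq 2^{kN}$. This converts an otherwise continuous capacity problem into a finite one. Next, for an arbitrary but fixed $h \in \mathcal{H}$, I would argue that the per-sample squared loss $\ell(Z, h) = (y - h(x))^2$ is uniformly bounded (the outputs are in $[-1,1]$ and, appealing to Assumption 2 together with Theorem~\ref{Theorem-1}, the D-MRGeN targets ${\Delta}'(x)$ used to train the DNN lie in a bounded range as well). Hoeffding's inequality then yields, for each fixed $h$,
\begin{equation}
\Pr\left[\,\bigl|\hat{L}(h) - \mathbb{E}_P\,\ell(Z,h)\bigr| \geq \epsilon\,\right] \;\leq\; 2\exp(-2 m \epsilon^2).
\end{equation}

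The third step is a union bound over the at most $2^{kN}$ members of $\mathcal{H}$, which controls the uniform deviation of empirical loss from expected loss:
\begin{equation}
\Pr\left[\,\sup_{h \in \mathcal{H}} \bigl|\hat{L}(h) - \mathbb{E}_P\,\ell(Z,h)\bigr| \geq \epsilon\,\right] \;\leq\; 2 \cdot 2^{kN} \exp(-2 m \epsilon^2).
\end{equation}
Because any SEM algorithm $\mathcal{A}$ returns some $\hat{h} \in \mathcal{H}$, this uniform bound transfers directly to $\mathcal{A}$'s output. Setting the right-hand side equal to $\delta$ and solving for $m$ gives
\begin{equation}
m \;\geq\; \frac{1}{2\epsilon^2}\bigl(kN\ln 2 + \ln(2/\delta)\bigr),
\end{equation}
which (up to the constant factor absorbed in the stated bound) yields the claimed sample complexity
\begin{equation}
m_{\mathcal{A}}(\epsilon,\delta) \;\leq\; \frac{1}{\epsilon^2}\bigl(kN\ln 2 + \ln(2/\delta)\bigr).
\end{equation}

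The main technical point to be careful about is the bounded-loss assumption underlying Hoeffding: I must justify that the targets produced by the D-MRGeN generator and the DNN outputs jointly keep $\ell$ in a bounded range independent of $m$, rather than simply assuming it. This follows from the projection-based weight boundedness in Assumption~2 combined with the UUB guarantee of Theorem~\ref{Theorem-1}, which prevents the D-MRGeN targets ${\Delta}'(x)$ from blowing up. A secondary subtlety is that the discretization bound $|\mathcal{H}| \leq 2^{kN}$ is exact only when the network's mapping from quantized weights to functions is injective; since we are only upper-bounding cardinality, any collapse among equivalent parameterizations only helps the bound. Everything else is routine arithmetic in converting the Hoeffding tail bound into a sample-complexity statement.
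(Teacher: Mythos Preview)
Your proposal is correct and follows essentially the same route as the paper: bound $|\mathcal{H}|\le 2^{kN}$ from the $k$-bit quantization, apply Hoeffding to a fixed hypothesis, take a union bound over $\mathcal{H}$, and solve for $m$. The only cosmetic difference is the Hoeffding constant (the paper uses $2e^{-\epsilon^2 m/2}$ whereas you use $2e^{-2m\epsilon^2}$), which merely shifts the absolute constant in front and does not affect the argument.
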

\begin{proof}
Let $\mathcal{H}$ be finite hypothesis class of function mapping s.t $\mathcal{H}: \mathcal{X} \to [-1,1] \in \mathbb{R}^m$ and $\mathcal{A}$ is SEM algorithm for $\mathcal{H}$. Then by Hoeffding inequality for any fixed $f_{\boldsymbol{\theta}} \in \mathcal{H}$ the following event holds with a small probability $\delta$
\begin{eqnarray}
&&P^m\{|L(\boldsymbol{Z, \theta}) - \mathbb{E}_P(\ell(\boldsymbol{Z, \theta}))| \geq \epsilon\}\\
&=& P^m\left\{\left|\sum_{i=1}^m \ell(\boldsymbol{Z, \theta}) - m\mathbb{E}_P(\ell(\boldsymbol{Z, \theta}))\right| \geq m\epsilon\right\}\\
&\leq& 2e^{-\epsilon^2m/2}
\end{eqnarray}
Hence\vspace{-5mm}
\begin{eqnarray}
&&P^m\{ \forall f_{\boldsymbol{\theta}} \in \mathcal{H}, | \left|L(\boldsymbol{Z, \theta}) - \mathbb{E}_P(\ell(\boldsymbol{Z, \theta}))\right| \geq \epsilon\} \nonumber \\
&\leq& 2|\mathcal{H}|e^{-\epsilon^2m/2} = \delta
\label{eq:sample_complexity}
\end{eqnarray}
We note that the total number of possible states that is assigned to the weights is $\left(2^k\right)^N$ since there are $2^k$ possibilities for each weights. Therefore $\mathcal{H}$ is finite and $|\mathcal{H}| \leq 2^{kN}$. The result follows immediately from simplifying Eq-\eqref{eq:sample_complexity}.
\end{proof}

\section{Simulations}
\label{results}
In this section, we will evaluate the presented DMRAC adaptive controller using a 6-DOF Quadrotor model for the reference trajectory tracking problem. The quadrotor model is completely described by 12 states, three position, and velocity in the North-East-Down reference frame and three body angles and angular velocities. The full description of the dynamic behavior of a Quadrotor is beyond the scope of this paper, and interested readers can refer to \cite{joshi2017robust} and references therein.

The control law designed treats the moments and forces on the vehicle due to unknown true inertia/mass of the vehicle and moments due to aerodynamic forces of the crosswind, as the unmodeled uncertainty terms and are captured online through DNN adaptive element. The outer-loop control of the quadrotor is achieved through Dynamic Inversion (DI) controller, and we use DMRAC for the inner-loop attitude control. A simple wind model with a boundary layer effect is used to simulate the effect of crosswind on the vehicle. 

A second-order reference model with natural frequency $4 rad/s$ and damping ratio of $0.5$ is used. Further stochasticity is added to the system by adding Gaussian white noise to the states with a variance of $\omega_n = 0.01$. The simulation runs for $150$secs and uses time step of $0.05s$. The maximum number of points ($p_{max}$) to be stored in buffer $\mathcal{B}$ is arbitrarily set to $250$, and SVD maximization algorithm is used to cyclically update $\mathcal{B}$ when the budget is reached, for details refer \cite{5991481}.  

The controller is designed to track a stable reference commands $r(t)$. The goal of the experiment is to evaluate the tracking performance of the proposed DMRAC controller on the system with uncertainties over an unknown domain of operation. The learning rate for D-MRGeN network and DMRAC-DNN networks are chosen to be $\Gamma = 0.5I_{6 \times 6}$ and $\eta = 0.01$. The DNN network is composed of 2 hidden layers with $200,100$ neurons and with tan-sigmoid activations, and output layer with linear activation. We use ``Levenberg-Marquardt backpropagation'' \cite{yu2011levenberg} for updating DNN weights over $100$ epochs. Tolerance threshold for kernel independence test is selected to be $\zeta_{tol} = 0.2$ for updating the buffer $\mathcal{B}$.

Figure-\ref{fig:plot_1} and Fig-\ref{fig:plot_2} show the closed loop system performance in tracking the reference signal for DMRAC controller and learning retention when used as the feed-forward network on a similar trajectory (Circular) with no learning. We demonstrate the proposed DMRAC controller under uncertainty and without domain information is successful in producing desired reference tracking. Since DMRAC, unlike traditional MRAC, uses DNN for uncertainty estimation is hence capable of retaining the past learning and thereby can be used in tasks with similar features without active online adaptation Fig-\ref{fig:plot_2}. Whereas traditional MRAC which is ``pointwise in time" learning algorithm and cannot generalize across tasks. The presented controller achieves tighter tracking with smaller tracking error in both outer and inner loop states as shown in Fig-\ref{fig:plot_2} and Fig-\ref{fig:plot_3} in both with adaptation and as a feed-forward adaptive network without adaptation.  Figure-\ref{fig:plot_4} demonstrate the DNN learning performance vs epochs. The Training, Testing and Validation error over the data buffer for DNN, demonstrate the network performance in learning a model of the system uncertainties and its generalization capabilities over unseen test data. 
\begin{figure*}[tbh]
    \centering
	\begin{subfigure}{0.9\columnwidth}
		\includegraphics[width=1.05\textwidth, height = 0.6\textwidth]{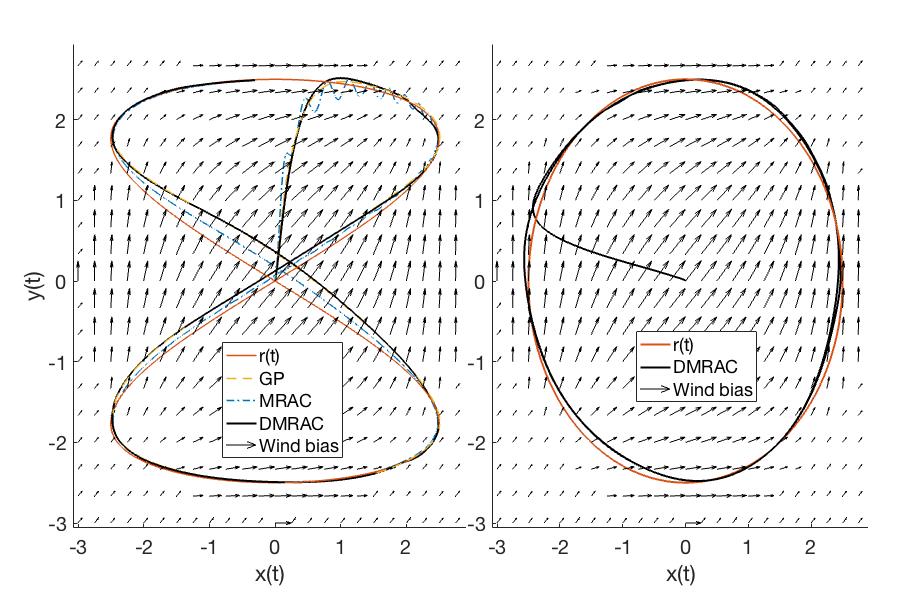}
		\vspace{-0.2in}
		\caption{}
		\label{fig:plot_1}
	\end{subfigure}
	\begin{subfigure}{1.1\columnwidth}
		\includegraphics[width=\textwidth, height = 0.5\textwidth]{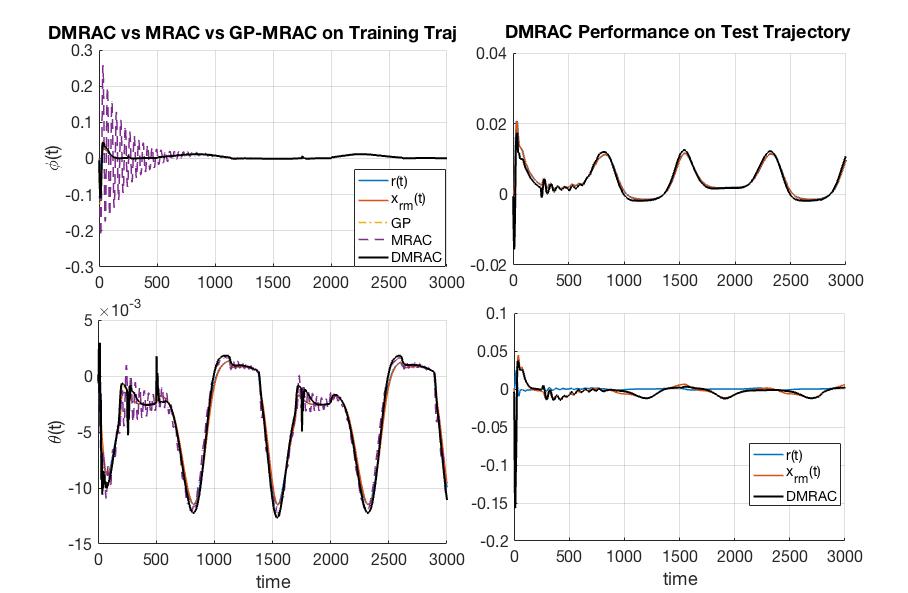}
		\vspace{-0.2in}
		\caption{}
		\label{fig:plot_2}
	\end{subfigure}
	\vspace{-0.12in}
	\caption{DMRAC Controller Evaluation on 6DOF Quadrotor dynamics model (a) DMRAC vs MRAC vs GP-MRAC Controllers on quadrotor trajectory tracking with active learning and DMRAC as frozen feed-forward network (Circular Trajectory) to test network generalization (b) Closed-loop system response in roll rate $\phi(t)$ and Pitch $\theta(t)$}
	\label{fig:grid_world}
	\vspace{-4mm}
\end{figure*}
\begin{figure*}[tbh]
    \centering
	\begin{subfigure}{1.2\columnwidth}
		\includegraphics[width=\textwidth, height = 0.5\textwidth]{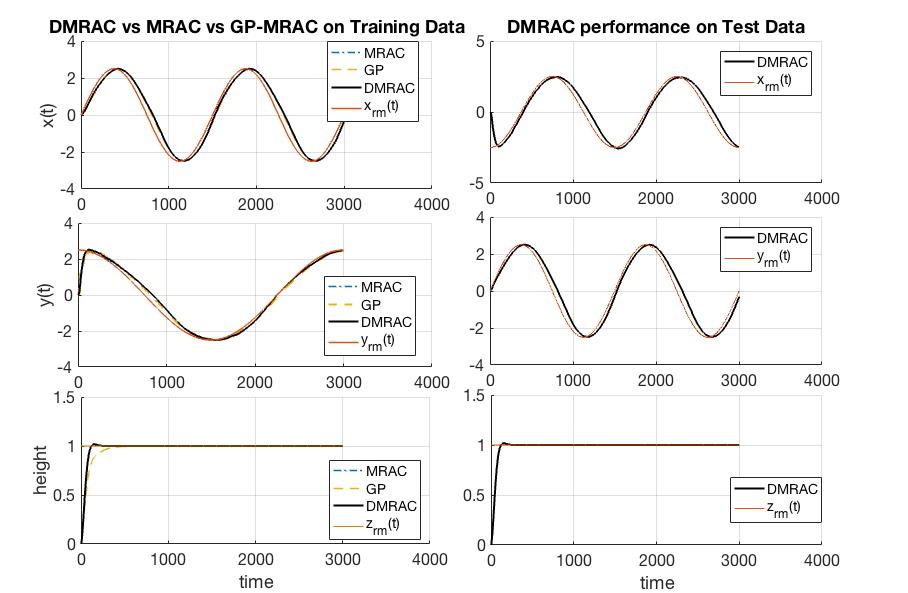}
		\vspace{-0.2in}
		\caption{}
		\label{fig:plot_3}
	\end{subfigure}
	\begin{subfigure}{0.80\columnwidth}
		\includegraphics[width=1.1\textwidth]{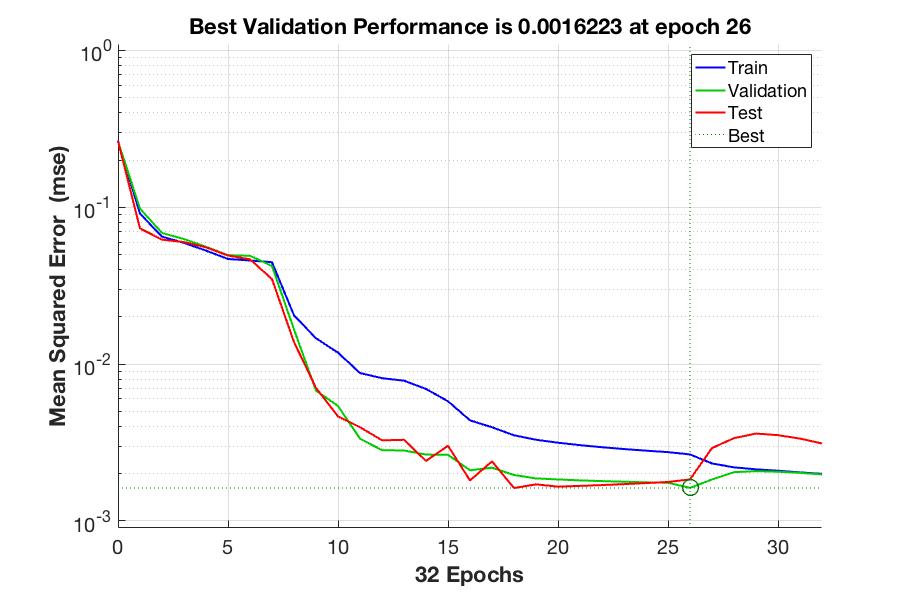}
		\vspace{-0.2in}
		\caption{}
		\label{fig:plot_4}
	\end{subfigure}
	\vspace{-0.12in}
	\caption{(a) Position Tracking performance of DMRAC vs MRAC vs GP-MRAC controller with active learning and Learning retention test over Circular Trajectory for DMRAC (b) DNN Training, Test and Validation performance.}
	\label{fig:grid_world}
	\vspace{-5mm}
\end{figure*}

\section{Conclusion}
\label{conclusions}
In this paper, we presented a DMRAC adaptive controller using model reference generative network architecture to address the issue of feature design in unstructured uncertainty. The proposed controller uses DNN to model significant uncertainties without knowledge of the system's domain of operation. We provide theoretical proofs of the controller generalizing capability over unseen data points and boundedness properties of the tracking error. Numerical simulations with 6-DOF quadrotor model demonstrate the controller performance, in achieving reference model tracking in the presence of significant matched uncertainties and also learning retention when used as a feed-forward adaptive network on similar but unseen new tasks. Thereby we claim DMRAC is a highly powerful architecture for high-performance control of nonlinear systems with robustness and long-term learning properties.
\vspace{-1mm}

\bibliographystyle{unsrt}
\bibliography{TAC_Note_Unmatched_Uncertainty_references}

\end{document}